%
%

\documentclass[11pt,a4paper]{article}
\usepackage[hyperref]{acl2021}
\usepackage{times}
\usepackage{soul}
\usepackage{latexsym}
\usepackage{amsmath}
\usepackage{amssymb}

\usepackage{physics}
\usepackage{bbm, dsfont}
 
\usepackage{multirow}
\usepackage{amsmath}
\usepackage{algorithm}
\usepackage{amsfonts}
\usepackage{algpseudocode}
\usepackage{amsthm}
\usepackage{textgreek}
\usepackage{relsize}
\usepackage{booktabs}
\usepackage{multicol} 
\usepackage[titletoc]{appendix} 

\newtheorem{thm}{Theorem}
\usepackage[noabbrev,capitalize,nameinlink]{cleveref}
\usepackage{tikz}
\newcommand*\circled[1]{\tikz[baseline=(char.base)]{
            \node[shape=circle,draw,inner sep=1.4pt] (char) {#1};}}

\usepackage{graphicx}

\usepackage{microtype}


\usepackage{amsmath}
\usepackage{cleveref}

\crefname{ablation}{ablation}{ablations}
\crefname{equation}{Expression}{Expressions}
\crefname{alg}{algorithm}{algorithms}

\DeclareMathOperator{\M}{M}
\crefname{thm}{theorem}{theorems}

\title{Towards Probabilistically-Sound Beam Search with Masked Language Models}

\begin{document}

\aclfinalcopy
\maketitle

\begin{abstract}
Beam search with masked language models (MLMs) is challenging in part because joint probability distributions over sequences are not readily available, unlike for autoregressive models. However, estimating such distributions has important domain-specific applications such as ancient text restoration and protein engineering. Here we present probabilistically-sound methods for beam search with MLMs. First, we clarify the conditions under which it is theoretically sound to perform text infilling with MLMs using standard beam search. When these conditions fail, we provide a probabilistically-sound inference time modification with no additional computational complexity and demonstrate that it is superior to the aforementioned beam search in the expected conditions. We then present empirical results comparing several infilling approaches with MLMs across several domains. Notably, our method probes the inductive biases of MLMs and explores the surprising contextual sensitivity of mask tokens for text infilling. 

\end{abstract}

\section{Introduction}
Autoregressive language models (LMs) have demonstrated success in many tasks. Yet in specific contexts where bidirectionality is crucial, such as ancient text restoration and protein engineering, masked language models (MLMs) are most prevalent. Notably, MLMs still face a significant challenge in these settings: while MLMs learn conditional distributions of single tokens, applications to the aforementioned domains often require knowledge of the probability of multiple tokens jointly. 

The key challenge lies in computing the joint distribution $p(\mathbf{x})$ over sequences $\mathbf{x}$ given only the MLM-learned conditionals $p_i(x_i|\mathbf{x}_{-i})$, where $\mathbf{x}_{-i}$ denotes the context sequence $\mathbf{x}$ with the entry at index $i$ removed. The Hammersley-Clifford-Besag (HCB) theorem \citep{besag_spatial_1974} yields a direct algebraic construction of a joint distribution $p(\mathbf{x})$ which is valid in the case when the MLM-learned distributions are \textit{compatible}---that is, there exists a joint distribution which factors into the conditionals exactly \cite{hennigen_deriving_2023}.

\begin{figure}
    \centering
    \includegraphics[width=1.05\linewidth]{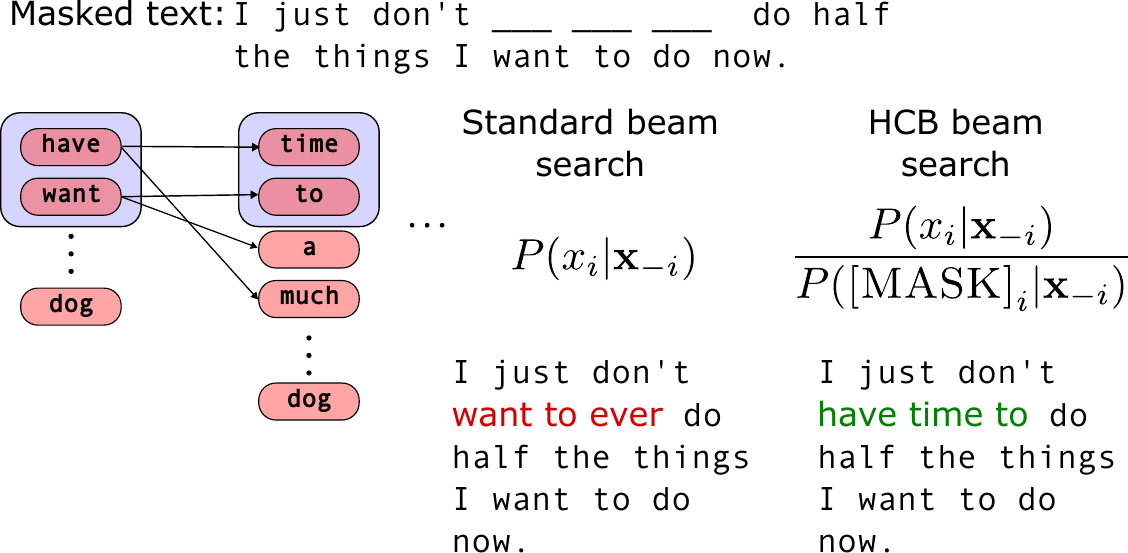}
    \caption{Overview of the proposed HCB beam search compared to standard beam search for text infilling.}
    \label{fig:schematic}
\end{figure}

In practical applications, a heuristic approximation based on the chain rule of autoregressive LMs is used for infilling with MLMs \citep{Ruffolo_Madani_2024, Shen_Quach_Barzilay_Jaakkola_2020, assael2022restoring, cowen-breen_logion_2023, wang_bert_2019}:
\begin{equation}\label{eqn:app}
    p(\mathbf{x}) \approx \prod_{i=1}^n p(x_i | \mathbf{x}_{:i}, \mathbf{[M]}_{i:})
\end{equation}
where the notation $\mathbf{[M]}_{i:}$ indicates that mask tokens are present from indices $i$ onwards.

In this work, we demonstrate that \autoref{eqn:app} is valid if and only if a \textit{conditional independence assumption} about the MLM-learned conditionals is satisfied and provide the conditions under which this assumption holds, assuming compatibility. We state these conditions in \Cref{thm:inf}:

\begin{thm}[Informal]\label{thm:inf}
Suppose that $p$ represents a model which achieves minimal training loss on the MLM objective. Then, on the training distribution, the learned conditionals are both \emph{compatible} and satisfy the \emph{conditional independence assumption}.
\end{thm}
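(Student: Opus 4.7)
The plan is to exploit the fact that any minimizer of the cross-entropy MLM loss must recover the true conditionals of the data distribution on every mask pattern that appears with positive probability during training. Write $p_{\mathrm{data}}$ for the training distribution, and let the training procedure draw a mask set $S \subseteq \{1,\dots,n\}$ from a distribution $q$ that is independent of $\mathbf{x}$. Up to an additive constant, the MLM loss equals the expected cross entropy between $p_{\mathrm{data}}(x_i \mid \mathbf{x}_{\bar{S}})$ and the model's prediction $p_\theta(x_i \mid \mathbf{x}_{\bar{S}}, \mathbf{[M]}_{S})$ summed over $i \in S$, so pointwise minimization forces, at the optimum,
$$p_\theta(x_i \mid \mathbf{x}_{\bar{S}}, \mathbf{[M]}_{S}) \;=\; p_{\mathrm{data}}(x_i \mid \mathbf{x}_{\bar{S}})$$
for every $(S, \mathbf{x}_{\bar{S}})$ in the support of the training distribution.

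First I would derive compatibility by specializing to $S = \{i\}$ for each $i$: the MLM's single-mask conditionals coincide with the univariate conditionals of $p_{\mathrm{data}}$, which are by definition compatible (the witness being $p_{\mathrm{data}}$ itself), providing the hypothesis required by the HCB construction recalled in the introduction. Then I would derive the conditional independence assumption by specializing to $S = \{i, i{+}1, \dots, n\}$, which yields
$$p_\theta(x_i \mid \mathbf{x}_{:i}, \mathbf{[M]}_{i:}) \;=\; p_{\mathrm{data}}(x_i \mid \mathbf{x}_{:i}).$$
This is exactly the statement that the trailing mask tokens are uninformative about $x_i$ given the prefix $\mathbf{x}_{:i}$; multiplying across $i$ and applying the chain rule for $p_{\mathrm{data}}$ then gives \autoref{eqn:app} as an exact equality on the training distribution.

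The main obstacle will be formalizing the hypothesis that the specific mask patterns I invoke, namely the singletons $\{i\}$ and the trailing blocks $\{i, i{+}1, \dots, n\}$, receive positive probability under $q$. For standard independent per-token masking (as in BERT) every $S \subseteq \{1,\dots,n\}$ has positive probability and the argument goes through verbatim; for fixed-proportion or span-masking schedules the conclusion must be stated relative to the support of $q$, and this is the restriction I would make explicit in the formal version. A secondary caveat is the idealization embedded in ``minimal training loss'': I would frame it as the existence of a model in the chosen function class that attains the Bayes-optimal MLM loss, so that the pointwise minimization step above is literally attainable, and then note that in practice this holds only approximately, motivating the empirical comparisons later in the paper.
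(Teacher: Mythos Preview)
Your proposal is correct and matches the paper's approach: both arguments use that log-loss is a strictly proper scoring rule, so the MLM minimizer reproduces the data-distribution conditionals for every mask configuration that occurs with positive probability, after which compatibility (from $S=\{i\}$) and the conditional independence law (from trailing-block $S$) are immediate under i.i.d.\ Bernoulli masking. Your discussion of the positive-probability requirement on mask patterns is more explicit than the paper's, but the underlying mechanism is identical.
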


We hypothesize that there may be two regimes as a consequence of \Cref{thm:inf}: a regime where training loss is small enough that the heuristic approach of \autoref{eqn:app} may be reasonable, and a regime where the conditional independence assumption may not hold, in which another approach is needed. In this work, we provide a modification of \autoref{eqn:app} based on the HCB theorem in the second regime, which relaxes the conditional independence assumption by including an adjustment term to correct for possible dependencies and requires no additional forward passes through the model. We present this modification, which we call \textbf{HCB beam search}, and the resulting theoretical and empirical contributions to zero-shot text infilling. 

\paragraph{Theoretical contributions.}
Previous work has evaluated text scoring rather than text infilling, considering proxies for joint distributions to evaluate the probabilities of pairs of tokens and computing the perplexities of these joint distributions \cite{hennigen_deriving_2023}. A key insight of our work is that the HCB construction is the only such proxy which decomposes in a way suitable for beam search. It is also particularly useful in low-resource settings, as it can be implemented with no additional computational requirements, allowing for the adaptation of generalist MLMs for the text infilling task. We present a theorem classifying the conditions under which the compatibility of learned distributions hold and showing that standard beam search with MLMs is justified precisely when the conditions of this theorem hold. 


\paragraph{Empirical contributions.}
We find empirically that our modification outperforms standard beam search in various contexts for several models, including BERT-base, BERT-large \citep{devlin_bert_2019}, DistillBERT \citep{sanh2019distilbert}, Ithaca \citep{assael2022restoring}, and Desformers \citep{devaul_desformers}. In exploring how HCB beam search fares across several models, we uncover model-specific performance results. To our knowledge, this work is also the first to investigate $p(\textsc{[mask]}| \cdot)$, demonstrating through \hyperlink{ablation1}{Ablation 1} and \hyperlink{ablation2}{Ablation 2} that it conveys a strongly context-specific signal beyond random noise which can improve beam search results. 



\section{Text Infilling}
We experimentally evaluate \autoref{eqn:app} and our modification of it via \textit{text infilling}, the task of predicting a missing span of text given its context. This task is well-motivated in the domains of protein language modeling and ancient text restoration \cite{zhu2019text}.

Much existing work on text infilling focuses on developing custom architectures and training from scratch \cite{Sun_Lee_Batra_2017, Ippolito_Grangier_Callison-Burch_Eck_2019, Shen_Quach_Barzilay_Jaakkola_2020, Donahue_Lee_Liang_2020, He_Sun_Tang_Wang_Huang_Qiu_2023}. In some low-resource domains, however, pretrained MLMs may be available when limitations on available data or compute prohibit training new text infilling models. Manuscript restoration is a prototypical example: training data is often restricted by copyright \citep{Graziosi_Haubold_Cowen-Breen_Brooks_2023}, and full trainings can be computationally expensive, yet various pretrained MLMs for ancient languages are available \cite{Bamman_Burns_2020, cowen-breen_detecting_2023, riemenschneider2023exploring}. Another example is protein language models (PLMs): the computational footprint of fine-tuning PLMs becomes a barrier for many research groups \cite{hu_lora_2021, sledzieski_democratizing_2023}. Despite the challenges posed by data and compute, effective infilling remains an important task for ancient text restoration and protein engineering.



\section{Background and Related Work}
\subsection{Beam search}
Beam search is a form of decoding which incrementally adds successive tokens to a set of candidate token sequences,  maintaining a fixed number of candidate sequences $\mathbf{x}=(x_1,\hdots,x_n)$ of highest joint probability $p(\mathbf{x})$. For autoregressive models, the chain rule,\footnote{In what follows, we generally omit right-context tokens from $p(\cdot|\cdot)$ for readability, but we note that modified forms of all of the following equations hold when $p$ is further conditioned on right-context.}
\begin{equation}\label{eqn:chain}
    \log p(\mathbf{x}) = \sum_{i=1}^n \log p(x_i|\mathbf{x}_{:i}),
\end{equation}
allows for the well-known autoregressive beam search algorithm (\autoref{sec:app_algorithms}, \cref{alg:auto}).

\subsection{Challenges with MLM beam search}\label{subsec:MLMbeam}
A major barrier to conducting beam search with MLMs is that MLMs are not language models \textit{a priori}, and thus it is less obvious that a joint distribution $p(\mathbf{x})$ exists and can be computed in terms of known quantities. If one is willing to tolerate compatibility and a \textbf{conditional independence assumption}\footnote{We investigate this assumption in \Cref{sec:discussion}. While \autoref{eqn:approx} may seem intuitively clear, it might be similarly intuitive that the conditionals learned by BERT are compatible, but this is far from true \cite{young2023inconsistencies}.} of the form\begin{equation}\label{eqn:approx}
    p(x_i | \mathbf{x}_{:i}, \mathbf{[M]}_{i:}) \approx  p(x_i | \mathbf{x}_{:i})
\end{equation}
then \autoref{eqn:chain} implies the following expression for the joint:\footnote{For a more detailed derivation, see \autoref{sec:proof}.}
\begin{equation}\label{eqn:approxbeam}
    \log p(\mathbf{x}) \approx \sum_{i=1}^n \log p(x_i | \mathbf{x}_{:i}, \mathbf{[M]}_{i:}) 
\end{equation}
\autoref{eqn:approxbeam} is the foundation for the implementation of beam search shown in \cref{alg:beam} with the standard scoring function.

\begin{figure*}
    \begin{minipage}{\textwidth}
\begin{algorithm}[H]
\small
\caption{Infilling beam search with a MLM. Given $\mathbf{x}$, a sequence of length $n$ with masked positions ${j, ..., k}$ and a beam size $B>0$, return a collection of generated sequences $S$ with masked positions filled in from vocabulary $V$. Uses a scoring function $f(\cdot)$ to evaluate a candidate beam extension.}\label{alg:beam}
\begin{algorithmic}
\State Initialize $S = \{(0,\emptyset)\}$
\For{$i\in\{j,\hdots,k\}$}
    \For{$(\ell,(x_j,\hdots,x_{i-1}))\in S$}
    \State Append to $S$: $(\ell + f(x), (x_j,\hdots,x_{i-1},x))$ for every $x\in V$.
    \EndFor
    \State $S \gets$ $\{$the $B$ sequences $(x_j,\hdots,x_i)$ of $S$ w/ highest $\ell\}$
\EndFor \\
\Return S
\end{algorithmic}
\end{algorithm}
    \end{minipage}

\begin{minipage}{\textwidth}
\vspace{2mm} 
Scoring functions $f$ for various beam search implementations:\footnotemark
        \begin{align*}
\text{Standard :\quad\quad} f(\cdot) &= \log p(\cdot|\mathbf{x}_{:i},\mathbf{[M]}_{i:k},\mathbf{x}_{k:})  \\
\text{HCB (ours) :\quad\quad} f(\cdot) &= \log p(\cdot|\mathbf{x}_{:i},\mathbf{[M]}_{i:k},\mathbf{x}_{k:}) - \log p(\mathbf{[M]}_i|\mathbf{x}_{:i},\mathbf{[M]}_{i:k},\mathbf{x}_{k:}) \\
\text{HCB with pivot $\mathbf{y}_{i:k}$ (ours) :\quad\quad} 
f(\cdot) &= \log p(\cdot|\mathbf{x}_{:i},\mathbf{y}_{i:k},\mathbf{x}_{k:}) - \log p(\mathbf{y}_i|\mathbf{x}_{:i},\mathbf{y}_{i:k},\mathbf{x}_{k:})
        \end{align*}
        \rule{\linewidth}{0.4pt}
    \end{minipage}
\end{figure*}\footnotetext{The notation $\mathbf{[M]}_{i:k}$ indicates that \textsc{[mask]} tokens occupy the indices from $i$ to $k$. Unless otherwise specified, the left-most mask index ($i$) is the one at which the probability $p(\cdot|\mathbf{[M]}_{i:k})$ will be evaluated.}

The approximation in \autoref{eqn:approx} is equivalent to the assumption that the distribution of $x_i$ conditioned on the given context $\mathbf{x}_{:i}$ is independent of the information that mask tokens occupy the indices from $i$ to $n$. It is unlikely that this assumption holds true in practice, as passing mask tokens to the model will alter the output distribution in general.

Therefore, to be probabilistically sound, this equation should include a term correcting for the potential dependency between $x_i$ and $\mathbf{[M]}_{i:}$. Including this term conveniently incurs almost no additional computational cost, to be described in \Cref{sec:methods}.

\subsection{Constructing joint distributions from conditionals}
Approximating the joint distributions of MLMs is an active area of research.
\citet{hennigen_deriving_2023} compare several joint distribution approximation schemes and find that when the MLM-learned conditional distributions are compatible---that is, there exists a joint distribution which factors into the conditionals exactly---the HCB theorem provides a direct algebraic construction of a joint distribution $p(\mathbf{x})$ from a set of conditionals $p(x_i|\mathbf{x}_{-i})$, up to a normalizing constant.

\begin{thm}[HCB]\label{thm:HCB}
Suppose that $p$ is a probability distribution with full support over the space $A^n$ of $n$-element sequences over an arbitrary alphabet $A$. Then for any two sequences $\mathbf{x},\mathbf{y}\in A^n$,
\[
\frac{p(\mathbf{x})}{p(\mathbf{y})} = \prod_{i=1}^n \frac{p(x_i | x_1,\hdots,x_{i-1},y_{i+1},\hdots,y_n)}{p(y_i | x_1,\hdots,x_{i-1},y_{i+1},\hdots,y_n)}
\]
\end{thm}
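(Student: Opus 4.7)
The plan is to prove this by a telescoping argument in which I interpolate one coordinate at a time between $\mathbf{y}$ and $\mathbf{x}$, using the elementary identity $p(\mathbf{z}) = p(z_i \mid \mathbf{z}_{-i}) \cdot p(\mathbf{z}_{-i})$ at each step. The full-support assumption is invoked precisely to guarantee that all conditional probabilities and marginals appearing in the ratios are strictly positive, so the divisions are well defined.

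Concretely, I would introduce the hybrid sequences
\[
\mathbf{z}^{(i)} = (x_1,\ldots,x_i, y_{i+1},\ldots,y_n), \qquad i=0,1,\ldots,n,
\]
so that $\mathbf{z}^{(0)} = \mathbf{y}$ and $\mathbf{z}^{(n)} = \mathbf{x}$. The key observation is that $\mathbf{z}^{(i)}$ and $\mathbf{z}^{(i-1)}$ differ only at coordinate $i$, where they take the values $x_i$ and $y_i$ respectively; their shared ``context'' $\mathbf{z}^{(i)}_{-i} = \mathbf{z}^{(i-1)}_{-i}$ is precisely $(x_1,\ldots,x_{i-1},y_{i+1},\ldots,y_n)$. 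Applying the definition of conditional probability to the numerator and denominator, the common marginal of the context cancels, leaving
\[
\frac{p(\mathbf{z}^{(i)})}{p(\mathbf{z}^{(i-1)})} = \frac{p(x_i \mid x_1,\ldots,x_{i-1},y_{i+1},\ldots,y_n)}{p(y_i \mid x_1,\ldots,x_{i-1},y_{i+1},\ldots,y_n)}.
\]

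The final step is to telescope: since
\[
\frac{p(\mathbf{x})}{p(\mathbf{y})} = \frac{p(\mathbf{z}^{(n)})}{p(\mathbf{z}^{(0)})} = \prod_{i=1}^n \frac{p(\mathbf{z}^{(i)})}{p(\mathbf{z}^{(i-1)})},
\]
substituting the per-step identity yields exactly the claimed product.

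I do not anticipate a serious obstacle here; the argument is essentially a one-variable-at-a-time swap combined with telescoping, and the only subtlety is keeping track of the indexing so that the ``left-context'' of the hybrid sequence consists of $x$-entries and the ``right-context'' of $y$-entries (which is why the displayed conditionals have exactly the form stated in the theorem). The full-support hypothesis is the sole analytic input, ensuring that no denominator in the telescoping product vanishes. If anything warrants extra care, it is simply verifying the boundary cases $i=1$ and $i=n$, where the left- or right-context is empty, to confirm the formula still reads correctly.
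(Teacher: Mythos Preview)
Your proof is correct: the hybrid-sequence telescoping argument you outline is exactly the standard proof of the Brook--Besag lemma, and the full-support hypothesis is used precisely where you say. The paper itself does not supply a proof of \Cref{thm:HCB}; it quotes the result as the classical Hammersley--Clifford--Besag theorem and cites \citet{besag_spatial_1974}, so there is no in-paper argument to compare against---your write-up is in fact the canonical one.
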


\noindent \Cref{thm:HCB} yields an immediate expansion for $\log p(\mathbf{x})$ which is an analog to \autoref{eqn:chain} for MLMs:
\begin{align}\label{eqn:pchain}
\begin{split}
    \log p(\mathbf{x}) \sim \sum_{i=1}^n &\log p(x_i | \mathbf{x}_{:i},\mathbf{y}_{i+1:})\\
     -& \log p(y_i | \mathbf{x}_{:i},\mathbf{y}_{i+1:})
\end{split}
\end{align}
where $\sim$ indicates equality up to addition of a constant $\mathbf{x}$ (equal to $\log p(\mathbf{y})$). Following \citet{hennigen_deriving_2023}, we refer to $\mathbf{y}$ as the \textbf{pivot}. When the conditional distributions are compatible, \autoref{eqn:pchain} should yield orderings of sequences $\mathbf{x}$ by their probabilities $p(\mathbf{x})$ in a manner which is consistent across choice of pivots.

In actuality, the conditional distributions learned by BERT do not appear to be compatible \cite{young2023inconsistencies, hennigen_deriving_2023}. When the MLM-learned conditional distributions are not compatible, the Arnold-Gokhale (AG) construction provides an algorithm for returning the joint distribution which \textit{most nearly} factors into the learned conditionals \cite{Arnold_Gokhale_1998}. \citet{hennigen_deriving_2023} find that the AG construction achieves the lowest perplexity when compared to a number of baselines; however, it is severely limited by the computational cost it incurs: memory requirements of $V^n$ for a vocabulary of size $V$ and $n$ missing tokens.


\section{Methods}\label{sec:methods}
Our primary observation is that the HCB theorem (\Cref{thm:HCB}) yields a straightforward correction to the standard beam search induced by \autoref{eqn:approxbeam} which incurs almost no additional computational cost.\footnote{Code for our experiments can be found at \url{https://github.com/rcalef/hcb_infilling}} Although the conditional distributions learned by BERT are not exactly compatible, they are empirically compatible enough to improve the accuracy of beam search in certain instances.\footnote{It is worth noting that standard beam search assumes compatibility to the same extent that HCB does.} Thus, these methods can be useful for infilling when the implementation of the AG construction is intractable.

Based on \Cref{thm:HCB} and \autoref{eqn:pchain}, we propose the following modification to \autoref{eqn:approxbeam}:

\vspace{-2em}
\begin{align}\label{eqn:mchain}
    \begin{split}
        \log p(\mathbf{x}_{j:k}|\mathbf{x}_{:j},\mathbf{x}_{k:}) \sim \sum_{i=j}^k &\log p(x_i | \mathbf{x}_{:i},\mathbf{[M]}_{i:k},\mathbf{x}_{k:}) \\
        -&\log p([\M]| \mathbf{x}_{:i},\mathbf{[M]}_{i:k},\mathbf{x}_{k:})
    \end{split}
\end{align}
\vspace{-1em}

where $\sim$ again indicates equality up to addition of a constant in the tokens-to-be-infilled $\mathbf{x}_{j:k}$, equal to $\log p([\mathbf{M}]_{i:k} | \mathbf{x}_{:j}, \mathbf{x}_{k:})$.
This correction term guarantees probabilistic soundness of the type ensured by \autoref{eqn:chain} and requires no additional forward passes to compute, as the tensor $p(\cdot | \mathbf{x}_{:i},\mathbf{[M]}_{i:})$ is computed with a single forward pass. Although the value of $p([\M]| \mathbf{x}_{:i},\mathbf{[M]}_{i:})$ does not have an immediately obvious intuitive meaning, \hyperlink{ablation1}{Ablation 1} and \hyperlink{ablation2}{Ablation 2} show that it depends strongly on the context $\mathbf{x}_{:i}$  and can be used to improve over baselines.

\subsection{Choosing a pivot}
For the purposes of sampling and infilling, the pivot $\mathbf{y}$ can be any sequence in the support of $p$. However, we find that some pivots lead to better performance than others (\autoref{fig:pivots}). During the computation of \autoref{eqn:mchain}, the pivot $\mathbf{y}$ is injected into the text as context. Therefore, we find it important that the pivot is reasonably in-distribution as context for the model, regardless of the position where it is injected.

The MLM training procedure makes one particular choice of pivot especially in-distribution for any context: the sequence of mask tokens $\mathbf{y}=([\M],\hdots,[\M])$. Throughout MLM training, mask locations are re-randomized, meaning that the MLM is likely to encounter sequences of masks as context in many positions across all examples in the train set. The probability $p([\M]|\mathbf{x}_{<i},\mathbf{y}_{>i})$ decreases drastically during training, as $[\M]$ never occurs as a label.\footnote{See Figure \ref{fig:mask_probs} for how $p([\M| \cdot])$ changes during BERT training. Note that while $[\M]$ is never seen as a label during training, it is still awarded positive probability on account of the softmax in the final layer of BERT.} Nonetheless, using this quantity improves infilling accuracy across different models in various domains. We perform ablation studies to verify that this probability captures genuine information about the context, rather than random noise.


\begin{table*}[]
    \small
    \centering
    \begin{tabular}{|c||c|c|c|c||c|c|c|c|}
    \hline
    \textit{Number of missing tokens} & 2 & 3 & 4 & 5 & 2 & 3 & 4 & 5 \\
    \hline
    & \multicolumn{4}{c||}{Top-1 Accuracy (\%)} & \multicolumn{4}{c|}{Top-5 Accuracy (\%)} \\
    \hline
    HCB Left-to-Right & 28.84 & 10.46 & 3.23 & 0.92 & 38.88 & 15.01 & 4.75 & 1.36 \\
    HCB Best-to-Worst & \textbf{29.76} & \textbf{11.29} & \textbf{3.66} & \textbf{1.02} & \textbf{39.73} & \textbf{16.20} & \textbf{5.40} & \textbf{1.56} \\
    Standard Left-to-Right & 26.43 & 8.93 & 2.69 & 0.74 & 36.80 & 13.77 & 4.30 & 1.21 \\
    Standard Best-to-Worst & 28.27 & 10.21 & 3.18 & 0.86 & 38.18 & 15.08 & 4.88 & 1.36 \\
    \hline
    Pure sampling & 23.71 & 6.77 & 1.58 & 0.30 & 28.78 & 7.92 & 1.73 & 0.33 \\
    Sampling with T=0.25 & 26.10 & 8.41 & 2.46 & 0.68 & 34.23 & 11.08 & 3.18 & 0.84 \\
    Sampling with T=0.50 & 25.89 & 8.23 & 2.22 & 0.68 & 33.38 & 10.61 & 2.96 & 0.78 \\
    Sampling with T=0.75 & 25.09 & 7.63 & 1.99 & 0.46 & 31.54 & 9.42 & 2.43 & 0.53 \\
    Nucleus sampling, p=0.90 & 24.31 & 7.23 & 1.87 & 0.40 & 27.91 & 8.49 & 2.08 & 0.45 \\
    \hline
    & \multicolumn{4}{c||}{BLEU} & \multicolumn{4}{c|}{BERTScore F1} \\
    \hline
    HCB Left-to-Right & 28.74 & 10.33 & 3.25 & 0.93 & \textbf{99.47} & \textbf{99.02} & 98.66 & \textbf{98.36}  \\
    HCB Best-to-Worst & \textbf{29.55} & \textbf{11.13} & \textbf{3.63} & \textbf{1.03} & 99.46 & \textbf{99.02} & \textbf{98.72} & 98.32 \\
    Standard Left-to-Right & 26.42 & 8.94 & 2.68 & 0.74 & 99.39 & 98.96 & 98.63 & 98.29 \\
    Standard Best-to-Worst & 28.15 & 10.05 & 3.12 & 0.89 & 99.43 & 98.98 & 98.64 & 98.29 \\
    \hline
    \end{tabular}
    \caption{Top-$k$ \% accuracy, BLEU score, and BERTScore at infilling consecutive missing tokens on 100K examples from Brown corpus, using BERT-base with beam size 5 and number of missing tokens ranging from 2 to 5.}
    \label{tab:combined_metrics}
\end{table*}

\section{Experimental Setup}
\subsection{Models}
For English text data, we use the MLMs (\# params) BERT-base (110M), BERT-large (340M), DistilBERT (66M), and RoBERTa (125M) \cite{devlin_bert_2019, sanh2019distilbert, liu2019roberta}. For ancient texts, we use the MLMs Ithaca (49M) and Desformers (126M), two character-level BERT models trained on ancient Greek \cite{assael2022restoring, devaul_desformers}. For protein language modeling, we use ESM2 (8M) \cite{lin_evolutionary-scale_2023}.

\subsection{Metrics}
To assess text infilling results, we use top-$k$ accuracy, BLEU score,\footnote{When there are only $k<4$ tokens to infill, we employ BLEU-$k$.} and BERTScore \cite{Papineni_Roukos_Ward_Zhu_2001, Zhang_Kishore_Wu_Weinberger_Artzi_2020}. We do not compute perplexity, which, for arbitrary scoring schemes, requires calculation of an intractable normalization constant. In contrast, top-$k$ accuracy and BLEU are directly computable. Following \citet{Shen_Quach_Barzilay_Jaakkola_2020}, we use top-1 predictions of each infilling scheme and the ground truth span to compute BLEU score and BERTScore.

\subsection{Datasets}
In English, we perform infilling experiments on three datasets: the Brown corpus \cite{brown}, the Stanford Natural Language Inference dataset (SNLI) \cite{Bowman_Angeli_Potts_Manning_2015}, and the extreme summarization dataset (XSUM) \cite{narayan_dont_2018}. This allows us to additionally test varying amounts of context: the datasets have an average of 452, 19, and 28 tokens of context per example, respectively. For ancient language models, we perform infilling experiments on the Packhard Humanities Institute's database of ancient Greek inscriptions, collected by \citet{sommerschield2021iphi}. For protein language models, we perform infilling experiments on a subset of protein sequences from UniProt \cite{uniprot_universal_2008}.



Each experiment consists of selecting a random subset of $k$ contiguous indices from a test example and performing infilling according to HCB beam search. We run experiments with varying values of $k$, beam size, and pivot choices. Additionally, we take advantage of MLMs' non-autoregressive nature and infill tokens in order of highest MLM confidence (best-to-worst), following \citet{Schick_Schütze_2021} and \citet{assael2022restoring}.

\subsection{Baselines}
We compare HCB beam search (both left-to-right and best-to-worst) to a number of zero-shot infilling baselines. The first is the standard MLM beam search of \cref{alg:beam} with the ``Standard'' $f$, for which we also consider both left-to-right and best-to-worst beam searches.

We also compare our method to adapted versions of several popular sampling schemes including nucleus sampling \cite{holtzman2020curious} and sampling with temperature \cite{Ackley_Hinton_Sejnowski_1985}. Since such schemes are designed to \textit{sample}, instead of to \textit{search}, they inherently involve fewer forward passes through the model than beam search. We therefore specifically compare HCB beam search with beam size $B$ to these hybrid sampling-search schemes in which we sample and store in memory $B$ candidate samples for each token. This approach ensures that the same number of forward passes is used by each scheme, therefore resulting in the same time and space complexity of each method.

\subsection{Ablations}
To test that the estimated probability of the mask token $p([\M]|\cdot)$ is not simply random noise, as we found a pivot of mask tokens is crucial for the success of HCB beam search, we perform two ablations. 
\paragraph{Ablation 1 (Context Scramble):}\hypertarget{ablation1}
First, we test the hypothesis that $p([\M]|\mathbf{x})$ is sensitive to the given context $\mathbf{x}$. To do so, we track the values of the last 1,000 calls to $p([\M]|\mathbf{x}^{(i)})$ for the most recent contexts seen $(\mathbf{x}^{(1)},\hdots,\mathbf{x}^{(1,000)})$, and for the ablation, we replace the value of $p([\M]|\mathbf{x})$ in \cref{alg:beam} (``HCB'' $f$) with a random element of the previous 1,000 calls. 
\paragraph{Ablation 2 (Random Token Swap):}\hypertarget{ablation2}
Second, we test the hypothesis that $p([\M]|\mathbf{x})$ is sensitive to the input $[\M]$. To do so, we replace $[\M]$ with a completely random token $y$, and replace the computation of $p([\M]|\mathbf{x})$ in HCB beam search with $p(y|\mathbf{x})$.

\section{Results}\label{sec:results}

\subsection{English language models}

In our English language experiments with BERT-base, we observe a consistent relative ranking of methods: HCB Best-to-Worst $>$ HCB Left-to-Right $>$ Standard Best-to-Worst $>$ Standard Left-to-Right. \autoref{tab:combined_metrics} shows that this ranking persists across metrics (top-$k$ accuracy, BLEU, BERTScore F1) and number of missing tokens (2 through 5). In \autoref{tab:datasets} and \autoref{fig:eng_models}, we see the ranking additionally persists across datasets (Brown, SNLI, XSUM) and beam sizes (5 and 20), respectively.

We observe that BERTScore values are largely consistent across beam search methods (\autoref{tab:english_multi_model}), likely due to the relatively small number of tokens being infilled in each example, but HCB consistently outperforms standard beam search. We note too that standard left-to-right beam search outperforms our nucleus sampling-beam search hybrid, consistent with the findings of \cite{Shaham_Levy_2022}, as well as all other sampling-based methods.

Across models we note that the various beam search methods rank similarly when using BERT-large and DistilBERT, though RoBERTa stands out as a case where standard beam search outperforms HCB beam search. For a given model, we also find that the ranking of methods is consistent across values of $k$ for top-$k$ accuracy (\autoref{sec:app_B}, \autoref{fig:eng_models}).

\begin{table}[]
    \small
    \centering
    \begin{tabular}{|c||c|c|}
    \hline
    Model & BERT-base & RoBERTa \\
    \hline\hline
         \multirow{2}{*}{
         \begin{tabular}{cc}
             \multirow{2}{*}{\circled{$\mathbb{B}$}} & HCB \\
              & Standard 
         \end{tabular}
         } & \textbf{11.43} & 25.29 \\
          & 10.63 & \textbf{25.81} \\
         \hline
         \multirow{2}{*}{
         \begin{tabular}{cc}
             \multirow{2}{*}{\circled{$\mathbb{S}$}} & HCB \\
              & Standard 
         \end{tabular}
         } & \textbf{7.49} & 13.66 \\
          & 7.08 & \textbf{14.56} \\
         \hline
         \multirow{2}{*}{
         \begin{tabular}{cc}
             \multirow{2}{*}{\circled{$\mathbb{X}$}} & HCB \\
              & Standard
         \end{tabular}
         } & \textbf{9.86} & 23.21 \\
         & 9.18 & \textbf{24.52} \\
         \hline
    \end{tabular}
    \caption{Comparison of top-1 accuracy for HCB vs. standard beam search on the Brown \circled{$\mathbb{B}$}, SNLI \circled{$\mathbb{S}$} and XSUM \circled{$\mathbb{X}$} datasets, across models. We only show best-to-worst, since left-to-right was strictly worse in each case.}
    \label{tab:datasets}
\end{table}

\begin{table*}[]
    \small
    \centering
    \begin{tabular}{|c||c|c|c|c|}
    \hline
    Model & BERT-base & BERT-large & DistilBERT & RoBERTa \\
\hline\hline
HCB Left-to-Right & 10.86 & 11.92 & 6.12 & 24.74 \\
HCB Best-to-Worst & \textbf{11.43} & \textbf{12.58} & \textbf{6.44} & 25.29 \\
Standard Left-to-Right & 9.70 & 11.31 & 5.87 & 24.98 \\
Standard Best-to-Worst & 10.63 & 12.32 & 6.39 & \textbf{25.81} \\
\hline
Ablation 1 (Context Scramble) & 4.63 & 4.08 & 2.06 & 9.86 \\
Ablation 2 (Random Token Swap) & 9.71 & 11.16 & 5.23 & 17.61 \\
\hline
\end{tabular}
    \caption{Top-1 \% accuracy at infilling a random number (uniform between 2 and 5) of missing tokens on 100K examples from Brown corpus, across models, with beam size 5. For ablations, we show only best-to-worst results since they strictly outperformed left-to-right results.}
    \label{tab:english_multi_model}
\end{table*}


\begin{table*}[]
    \small
    \centering
    \begin{tabular}{|c||c|c|c||c|c|c||c|c|c|}
    \hline
    & \multicolumn{3}{c||}{Top-1 Accuracy (\%)} & \multicolumn{3}{c||}{Top-5 Accuracy (\%)} & \multicolumn{3}{c|}{Top-10 Accuracy (\%)} \\
    \hline
    \textit{Beam size} & 10 & 15 & 20 & 10 & 15 & 20 & 10 & 15 & 20 \\
    \hline
    Desformers HCB & 51.66 & 52.18 & 52.59 & \textbf{63.11} & \textbf{62.58} & \textbf{62.27} & \textbf{67.88} & \textbf{69.67} & \textbf{66.56} \\
    Desformers Standard & \textbf{55.21} & \textbf{56.08} & \textbf{56.62} & 60.79 & 60.80 & 60.98 & 64.88 & 66.29 & 63.43 \\
    \hline
    \hline  
    & \multicolumn{3}{c||}{Top-1 Accuracy (\%)} & \multicolumn{3}{c||}{Top-10 Accuracy (\%)} & \multicolumn{3}{c|}{Top-20 Accuracy (\%)} \\
    \hline
    \textit{Number of missing tokens} & 5 & 6 & 7 & 5 & 6 & 7 & 5 & 6 & 7 \\
    \hline
    Ithaca HCB & \textbf{68.91} & \textbf{61.97} & \textbf{53.82} & \textbf{85.33} & \textbf{79.42} & \textbf{72.33} & \textbf{87.71} & 81.81 & 75.14 \\
    Ithaca Standard & 68.87 & 61.86 & 53.75 & 85.24 & 79.38 & 71.21 & 87.68 & \textbf{81.83} & \textbf{75.17} \\
    \hline
    \end{tabular}
    \caption{Top-k accuracy for infilling tasks on inscription datasets. (Top) Desformers top-k accuracy on 10K examples infilling two missing tokens using \textsc{[MASK]} pivot. (Bottom) Ithaca top-k accuracy on 6K examples infilling using beam size 20 and ``-'' pivot.}

    \label{tab:combined_beam}
\end{table*}

\subsection{Domain-specific language models}

\textit{Ancient texts.} In experiments infilling two missing characters with Desformers, standard beam search outperforms HCB beam search in top-1 accuracy, but HCB shows significant improvements for larger top-$k$ values (\autoref{tab:combined_beam}). However, HCB's performance with this model decreases relative to the standard method when infilling larger gaps.\footnote{On gap sizes larger than three characters, HCB is inferior to standard beam search across all experiments with this model.} With Ithaca, HCB beam search shows consistent improvement over standard beam search across different gap sizes (\autoref{tab:combined_beam}).

\noindent \textit{Protein sequences.} In experiments infilling between two and five missing amino acids in a protein sequence using ESM2, we find that HCB beam search shows comparable, albeit lower performance to standard beam search (\autoref{sec:app_B}, \autoref{tab:protein_results}). Thus, it appears that for protein sequences, the conditional independence assumption may hold better than for human language, leading to relatively strong standard beam search performance; this is further explored in \Cref{sec:discussion}.  

\begin{figure}
    \centering
    \includegraphics[width=0.95\linewidth]{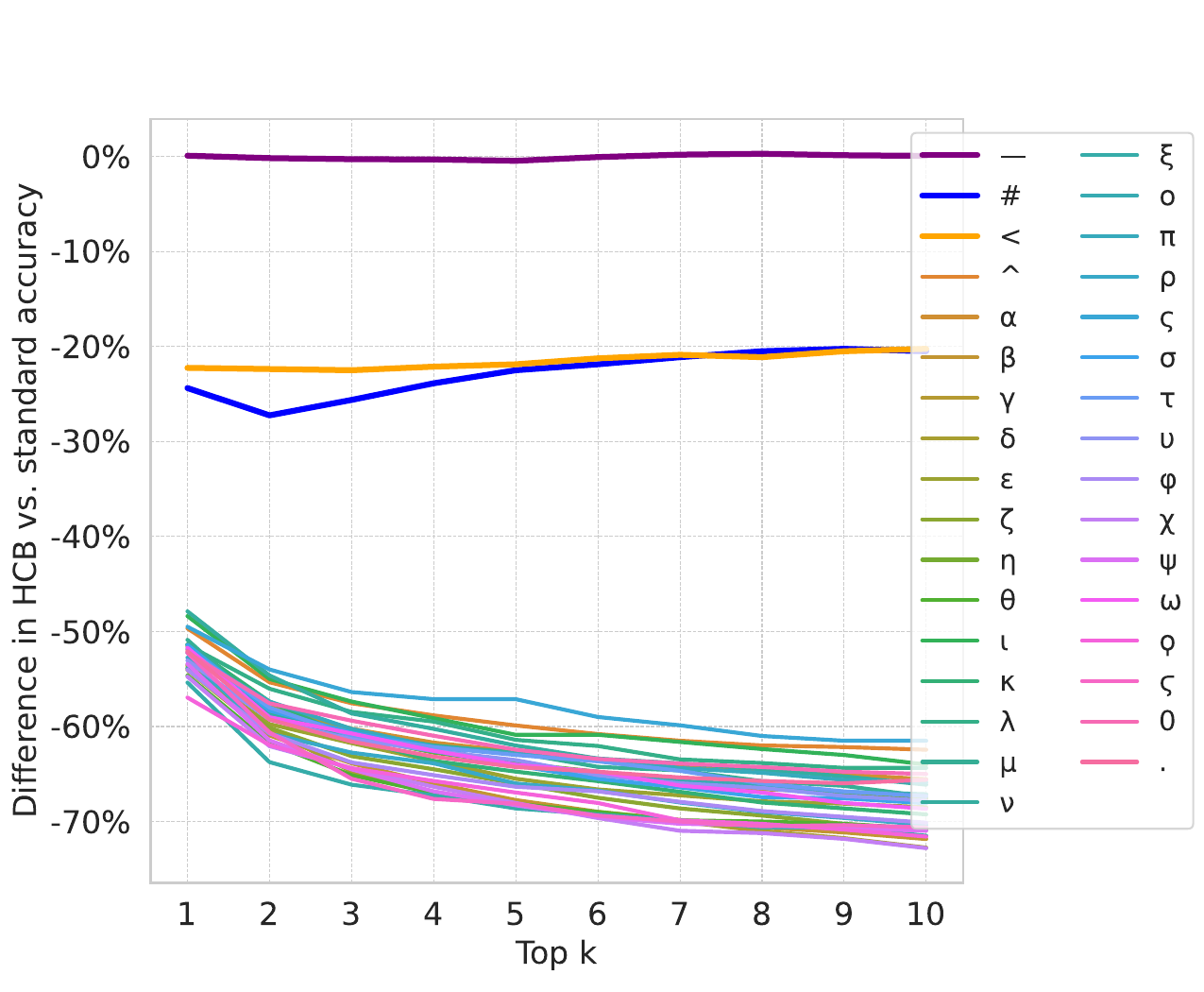}
    \caption{Difference in Ithaca infilling accuracy across all choices of pivots.}
    \label{fig:pivots}
\end{figure}

\subsection{Pivot design}

To explore how performance of HCB beam search depends on the choice of pivot, we extensively test all pivots for seven-character infills with a beam size of 20 for Ithaca. Testing every possible pivot choice is made possible by Ithaca's small 34-token vocabulary. With over 800 trials per pivot, we observe that the special token ``-'', which corresponds to a missing character, is clearly the best performing pivot choice, achieving accuracy similar to standard beam search. Two other special tokens ``\#'' and ``$<$'' perform about $20\%$ worse as HCB pivots, and the 31 remaining tokens perform over $50\%$ worse than standard beam search (\autoref{fig:pivots}). We also exhaustively test all pivots for ESM2, observing that the best choice of pivot appears to be the special token ``[CLS]'' (\autoref{sec:app_B}, \autoref{fig:prot_pivot}), though this optimal pivot choice is much less clear than for Ithaca. 

Notably, Ithaca's architecture does not include the mask token in its output probability distribution; thus, the missing character token, which is scattered randomly throughout train and test examples, is the token which remains most in-distribution when injected as a pivot into random positions. Similarly, for ESM2, we observe that some special tokens perform noticeably better as pivots than regular tokens. These results suggest that special tokens, which do not add disruptive context, could be better candidates for pivots. 

\subsection{Ablations}\label{subsec:ablationresults}

\hyperlink{ablation1}{Ablation 1} dramatically worsens the performance of HCB beam search, as we show in \Cref{tab:english_multi_model}. \hyperlink{ablation2}{Ablation 2} also deteriorates the performance of HCB beam search, although less so than \hyperlink{ablation1}{Ablation 1}. We conclude that $p([\M]|\mathbf{x})$ is sensitive to both the context $\mathbf{x}$ and input token $[\M]$, and we hypothesize that this contributes to the success of HCB beam search. Since results are far worse when we randomly replace $\mathbf{x}$ than when we randomly replace $[\M]$, we hypothesize that the context $\mathbf{x}$ is a more important factor to the success of HCB beam search.

\section{Discussion}\label{sec:discussion}
In \Cref{sec:results}, we see that HCB beam search achieves comparable performance to standard beam search across languages and that superiority of either method is usually model-dependent, rather than data-dependent: for instance, HCB beam search consistently outperforms standard beam search with BERT-base, but consistently underperforms with RoBERTa across datasets as shown by \autoref{tab:datasets}.

One plausible hypothesis is that each beam search variant performs well when certain assumptions hold, and the validity of these assumptions is determined by the training procedure. For instance, given a probability distribution $p$ on fixed-length sequences $\mathbf{x}$, the claim that
\[
\log p(\mathbf{x}) = \sum_{i=1}^n \log p(x_i|\mathbf{x}_{:i}, \mathbf{[\M]}_{i:})
\]
holds for all $n$ and all $\mathbf{x}$ is equivalent to the claim that $x_i$ is conditionally independent of $\mathbf{[\M]}_{i:}$ given $\mathbf{x}_{:i}$. Therefore, standard beam search considers the true joint probability $p(\mathbf{x})$ exactly when this conditional independence holds.


The following theorem shows that any set of learned conditional distributions which achieve minimal loss on the MLM objective satisfy the conditional independence assumption:

\begin{thm}\label{thm:proper}
    Suppose that $\mathbf{x}^1,\hdots,\mathbf{x}^m$ are sequences of length $n$, and let $\mathbb{P}$ be their empirical distribution. Let $\{Z_{ij}\}_{i\leq m,j\leq n}$ be auxiliary i.i.d. Bernoulli random variables with parameter $p>0$. For each ground truth sequence $\mathbf{x}^i$, define a ``partially masked'' sequence $\mathbf{y}^i$ such that
    \[
    y^i_j = \begin{cases}
        x^i_j \quad & Z_{ij} = 0 \\
        [\M] \quad & Z_{ij} = 1
    \end{cases}
    \]
    where $[\M]$ is an arbitrary symbol not contained in the alphabet over which $\mathbb{P}$ has support.
    
    Consider the (standard) MLM loss function\footnote{In addition to masking random tokens, some MLM training procedures (e.g. BERT) include random replacement of tokens. This property can be included by simply including all such random transformations as elements of the training dataset $\mathbf{x}^1,\hdots,\mathbf{x}^m$.} on sets of conditional distributions $p(x_i|\mathbf{x}_{-i})$:
    \[
    L(p) = {\mathbb{E}_{Z} }\left[\sum_{i,j:Z_{ij}=1} -\log p(x^i_j | \mathbf{y}^i_{-j})\right]
    \]
    If the set of learned conditionals minimizes the training loss $L(p)$, then $p$ satisfies the following conditional independence law:
    \[
    p(x_j|\mathbf{x}_{:j}, \mathbf{[\M]}_{j+1:}) = p(x_j|\mathbf{x}_{:j});
    \]
    and the conditional distributions $p(x_j|\mathbf{x}_{-j})$ are compatible.
\end{thm}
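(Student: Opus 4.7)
The plan is to exploit the fact that cross-entropy is a strictly proper scoring rule, which pins down the Bayes-optimal conditional at every noising context of positive probability, and then to evaluate these optimal conditionals at two well-chosen contexts.

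First I would swap the order of summation and expectation to group the contributions to $L(p)$ by the position $j$ and the realised context $\mathbf{y} = \mathbf{y}^i_{-j}$. This decomposes $L(p)$ into a weighted sum over pairs $(j, \mathbf{y})$ of the per-context cross-entropy between $p(\cdot \mid \mathbf{y})$ and the law of $X_j$ under the noising process on the event $\{Z_j = 1, \mathbf{Y}_{-j} = \mathbf{y}\}$. Since the tensors $p(\cdot \mid \mathbf{y})$ for distinct contexts are independent parameters of the optimisation and each relevant context has positive probability (this implicitly uses that the Bernoulli parameter lies in $(0,1)$, in addition to the stated $p > 0$), Gibbs' inequality applied pointwise gives the Bayes-optimal conditional
\[
p^{*}(x \mid \mathbf{y}) \;=\; \Pr_{\mathbf{X}\sim\mathbb{P},\,Z}\bigl[X_j = x \,\big|\, \mathbf{Y}_{-j} = \mathbf{y},\; Z_j = 1\bigr].
\]

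Next I would instantiate this at two specific contexts. For compatibility, take $\mathbf{y} = \mathbf{x}_{-j}$ with no mask tokens: the conditioning event forces $Z_{ik} = 0$ and $X_k = x_k$ for all $k \neq j$, so (using $Z \perp \mathbf{X}$) the conditional law of $X_j$ collapses to $\mathbb{P}(X_j = x_j \mid \mathbf{X}_{-j} = \mathbf{x}_{-j})$; the learned family therefore coincides with the true conditionals of the joint $\mathbb{P}$, which is precisely compatibility. For the conditional independence law, take $\mathbf{y} = (\mathbf{x}_{:j}, [\M]_{j+1:})$: the unmasked left positions enforce $\mathbf{X}_{:j} = \mathbf{x}_{:j}$, while the masked right positions only fix $Z_{ik} = 1$ for $k > j$ and impose no constraint whatsoever on $\mathbf{X}_{j+1:}$, again because $Z$ is independent of $\mathbf{X}$. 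Marginalising out the right positions yields $p^{*}(x_j \mid \mathbf{x}_{:j}, [\M]_{j+1:}) = \mathbb{P}(x_j \mid \mathbf{x}_{:j})$, which by the preceding compatibility step is exactly $p(x_j \mid \mathbf{x}_{:j})$ as interpreted under the unique joint that the compatible conditionals induce.

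The main obstacle is the careful bookkeeping of the probability space: one must explicitly condition on $\{Z_j = 1\}$ so that the loss term under consideration is active, exploit $Z \perp \mathbf{X}$ to peel off the masked coordinates, and verify positivity of the two target contexts so that the pointwise minimisation is valid. The proper-scoring-rule step itself is routine but should be made formal by invoking Gibbs' inequality at each $\mathbf{y}$ of positive probability; beyond that, everything reduces to elementary conditional-probability manipulations.
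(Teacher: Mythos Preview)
Your proposal is correct and follows essentially the same route as the paper: both invoke the strict propriety of log-loss to identify the optimal conditionals with the true posteriors of $X_j$ given the noised context, and both obtain the conditional-independence law from the independence of the masking variables $Z$ and the data $\mathbf{X}$. Your version is considerably more explicit than the paper's terse sketch---in particular, your decomposition of $L(p)$ by context $(j,\mathbf{y})$ and your separate instantiations at the unmasked and right-masked contexts make precise what the paper compresses into the single line ``$p=\mathbb{P}$''---and your remark that one also needs the Bernoulli parameter strictly below $1$ (so that the all-unmasked context has positive probability) is a genuine detail the paper omits.
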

\begin{proof} See \autoref{sec:proof}.
\end{proof}

To explore this hypothesis, we trained two BERT models from scratch under different training conditions which might affect the validity of the conditional independence assumption. Specifically, one model was trained with dynamic masking, where the data is re-masked every epoch, and the other with static masking, where the data is masked only once at the start of training. After assessing a large number of checkpoints across both models, we do not observe HCB beam search becoming significantly less effective compared to standard beam search as training progressed. As performance appears to be model-specific, we recommend that researchers using beam search evaluate HCB beam search in their setting, as it requires minimal modification from standard beam search.

\section{Conclusion}

In this work, we develop and apply theoretically-sound methods to use pretrained MLMs for text infilling, a task with important applications spanning a wide variety of domains. We clarify the conditions under which it is theoretically sound to perform text infilling with MLMs using standard beam search. We introduce HCB beam search as a probabilistically-justified modification with no additional computational complexity and demonstrate its superiority to standard beam search with a suite of models. Future work exploring HCB beam search can help further elucidate the contexts in which HCB beam search is beneficial, especially in low-data or compute settings. 

\section{Limitations}

One limitation is that we do not explicitly know whether conditional independence assumptions hold for a given model, despite hypothesized heuristics. Another is the lack of intuition regarding the use of $\textsc{[mask]}$ as a pivot -- future work should more rigorously investigate optimal pivot choices and the interpretability of these entities. 


Moreover, the proposed HCB beam search relies on the presence of some token in a given model's output distribution which flexibly behaves as in-distribution; while some MLMs maybe include special characters in their output layers, others may not. A potential risk of our method is its ability to use MLMs outside of their original contexts, which could have unforeseen consequences and unpredictable behavior. 

Finally, our proposed method only applies to settings with a fixed number of missing tokens as opposed to the more general setting addressed by others \citep{Shen_Quach_Barzilay_Jaakkola_2020}. Nonetheless, we find this is a realistic assumption in some important applications, such as ancient text restoration and protein engineering. In the case of damaged inscriptions, domain experts posit an estimated number of missing characters based on physical distance \citep{bruun2014oxford}. In protein engineering, a reasonable constraint is maintaining a fixed protein sequence length while only modifying amino acids in functional regions. In both areas, the fixed length assumption has been used in cutting-edge models \citep{assael2022restoring, wang2022scaffolding}.

\section*{Acknowledgments}
We are grateful to Stephen Bates, Barbara Graziosi, Johannes Haubold, Lucas Torroba Hennigen, Yoon Kim,  Karthik Narasimhan, Indu Panigrahi, Henrique Schechter Vera, and Richard Zhu for helpful discussions and feedback on this work. 

This work is supported by NIGMS T32GM144273 (A.S.). The content is solely the responsibility of the authors and does not necessarily represent the official views of the National Institute of General Medical Sciences or the National Institutes of Health.

\clearpage

\newpage

\bibliographystyle{acl_natbib}
\bibliography{anthology, acl2021}

\clearpage

\onecolumn
\begin{center}
\Large \textbf{Appendix}
\end{center}

\appendix

\section{Proofs}\label{sec:proof}

\begin{proof}[Proof of \Cref{thm:proper}]
Log-loss is a strictly proper scoring function \cite{scoring}, in that it is uniquely minimized when $p=\mathbb{P}.$\footnote{For instance, see the following blog post on a consistency theorem for BERT:  \url{https://machinethoughts.wordpress.com/2019/07/14/a-consistency-theorem-for-bert/}} From the fact that $\mathbb{P}$ is a probability distribution, it immediately follows that the conditionals $p(x_j|\mathbf{x}_{-j})$ are compatible. On the other hand, consider the events
    \[
    p(x^i_j = x | \mathbf{y}^i_{:j} = \mathbf{z}_{:j}, \mathbf{y}^i_{j+1:} = \mathbf{[\M]}_{j+1:})
    \]
    for arbitrary $x$ and $\mathbf{z}$. As the event $\{\mathbf{y}^i_k = [\M]\}$ occurs if and only if $Z_{ik}=1$, and thus with probability $p$ independently of $\mathbf{x}$ for all $i\in[m],j\in[n]$, the conditional independence claim 
    \[
    p(x^i_j | \mathbf{x}^i_{:j}, \mathbf{[M]}_{j+1:}) = p(x^i_j | \mathbf{x}^i_{:j})
    \]
    follows.
\end{proof}
\vspace{5mm}
\begin{proof}[Proof that compatibility and conditional independence imply validity of \autoref{eqn:approxbeam}]
By compatibility, there exists a joint distribution $p(\mathbf{x})$ whose conditional distributions are equal to those learned by the MLM. By the chain rule and the assumption of conditional independence,
\begin{align*}
    \log p(\mathbf{x}) &= \sum_{i=1}^n \log p(x_i |\mathbf{x}_{<i}) \\
    &= \sum_{i=1}^n \log p(x_i | \mathbf{x}_{:i}, \mathbf{[M]}_{i:})
\end{align*}
On the other hand, if
\[
\log p(\mathbf{x}) = \sum_{i=1}^n \log p(x_i | \mathbf{x}_{:i}, \mathbf{[M]}_{i:})
\]
for all $n$, then we recover the conditional independence law
\[
p(x_j|\mathbf{x}_{:j}, [\M]_{j+1:}) = p(x_j|\mathbf{x}_{:j})
\]
by induction on $n$.
\end{proof}

\section{Algorithms} \label{sec:app_algorithms}

\begin{algorithm}
\small
\caption{Autoregressive beam search. Given a beam size $B>0$, returns a collection of generated sequences $S$ of length $n$.}\label{alg:auto}
\begin{algorithmic}
\State Initialize $S = \{(0,\emptyset)\}$
\For{$i\in\{1,\hdots,n\}$}
\For{$(\ell,(x_1,\hdots,x_{i-1}))\in S$}
\State $f(\cdot) \gets \log p(\cdot | {x_1,\hdots,x_{i-1}})$ (1 forward pass.)
\State Append to $S$: $(\ell + f(x), (x_1,\hdots,x))$ for every $x$.
\EndFor
\State $S \gets$ $\{$the $B$ sequences $(x_1,\hdots,x_i)$ of $S$ w/ highest $\ell\}$
\EndFor \\
\Return S
\end{algorithmic}
\end{algorithm}

\section{Additional figures} \label{sec:app_B}

\begin{table*}[h!]
    \centering
    \begin{tabular}{|c|c|c|c|c|}\hline
    Model & BERT-base & BERT-large & DistilBERT & RoBERTa \\
\hline
HCB Left-to-Right & 15.00 & 16.24 & 9.05 & 31.85 \\
HCB Best-to-Worst & \textbf{15.72} & \textbf{16.89} & 9.62 & 32.47 \\
Standard Left-to-Right & 14.02 & 15.95 & 9.03 & 32.40 \\
Standard Best-to-Worst & 14.87 & 16.72 & \textbf{9.65} & \textbf{33.15} \\
\hline
Pure sampling & 9.69 & 11.56 & 4.67 & 22.76 \\
Sampling with $T=0.25$ & 12.33 & 14.14 & 7.69 & 28.57 \\
Sampling with $T=0.5$ & 11.93 & 13.67 & 7.11 & 27.63 \\
Sampling with $T=0.75$ & 10.98 & 12.86 & 6.05 & 25.84 \\
Nucleus sampling, $p=0.9$ & 9.73 & 11.30 & 5.02 & 24.56 \\
\hline
\end{tabular}
    \caption{Top-5 accuracy at infilling a random number (uniform between 2 and 5) of missing tokens on 100K examples from Brown corpus, across models, with beam size 5.}
    \label{tab:my_label}
\end{table*}


\begin{table*}[h!]
    \centering
    \begin{tabular}{|c||c|c|c|c||c|c|c|c|}
    \hline
    & \multicolumn{4}{c||}{Top-1 Accuracy (\%)} & \multicolumn{4}{c|}{Top-5 Accuracy (\%)} \\
    \hline
    \hline
   Number of missing tokens & 2 & 3 & 4 & 5 & 2 & 3 & 4 & 5 \\
\hline
HCB Left-to-Right & 6.16 & 2.12 & 0.91 & 0.46 & 16.79 & 5.80 & 2.41 & 1.16 \\
HCB Best-to-Worst & 6.36 & 2.30 & 1.02 & 0.53 & 16.95 & 6.03 & 2.53 & 1.19 \\
Standard Left-to-Right & 6.57 & 2.49 & 1.29 & 0.71 & 17.64 & 6.51 & 2.92 & 1.50 \\
Standard Best-to-Worst & \textbf{6.65} & \textbf{2.58} & \textbf{1.31} & \textbf{0.76} & \textbf{17.76} & \textbf{6.70} & \textbf{3.09} & \textbf{1.67} \\
\hline
\end{tabular}
    \caption{Top-$k$ accuracy at infilling consecutive missing tokens on 10K examples from UniProt dataset, using ESM2 with beam size 5 and number of missing tokens ranging from 2 to 5.}
    \label{tab:protein_results}
\end{table*}

\begin{figure*}[h!]
    \centering \includegraphics[width=1.0\textwidth]{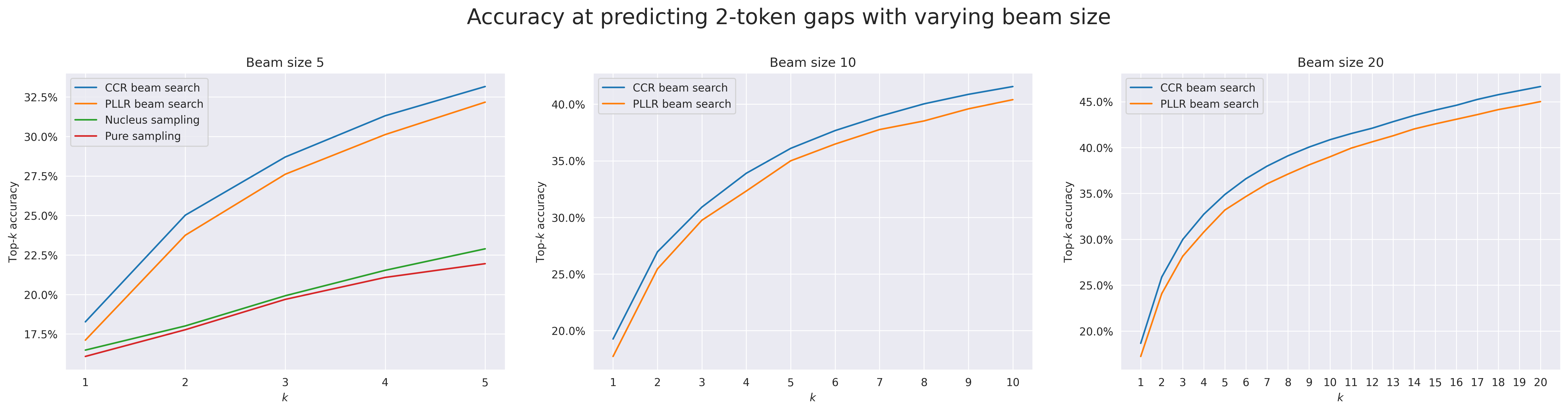}
    \caption{Comparison of HCB beam search with standard beam search, nucleus sampling, and pure sampling. Evaluated on 10,000 examples from the SNLI dataset. When comparing nucleus sampling to beam search with beam size $B$, we draw $B$ samples for a fair comparison.}
    \label{fig:varyingbeam}
\end{figure*}


\begin{figure*}[]
    \centering
    \includegraphics[width=1.0\linewidth]{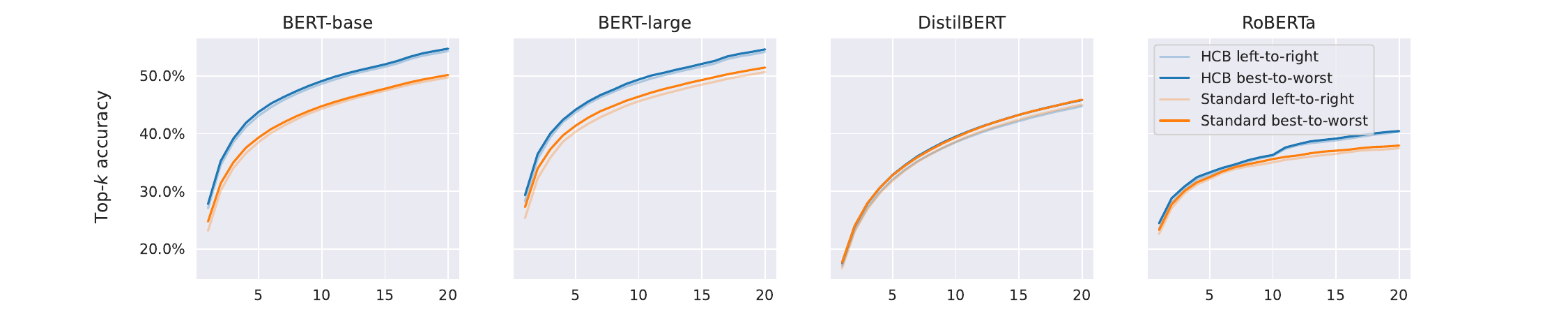}
    \caption{Performance of HCB beam search versus standard beam search methods across three English MLMs and DistilBERT, with beam size 20.}
    \label{fig:eng_models}
\end{figure*}

\begin{figure*}[h!]
    \centering
    \includegraphics[width=0.35\linewidth]{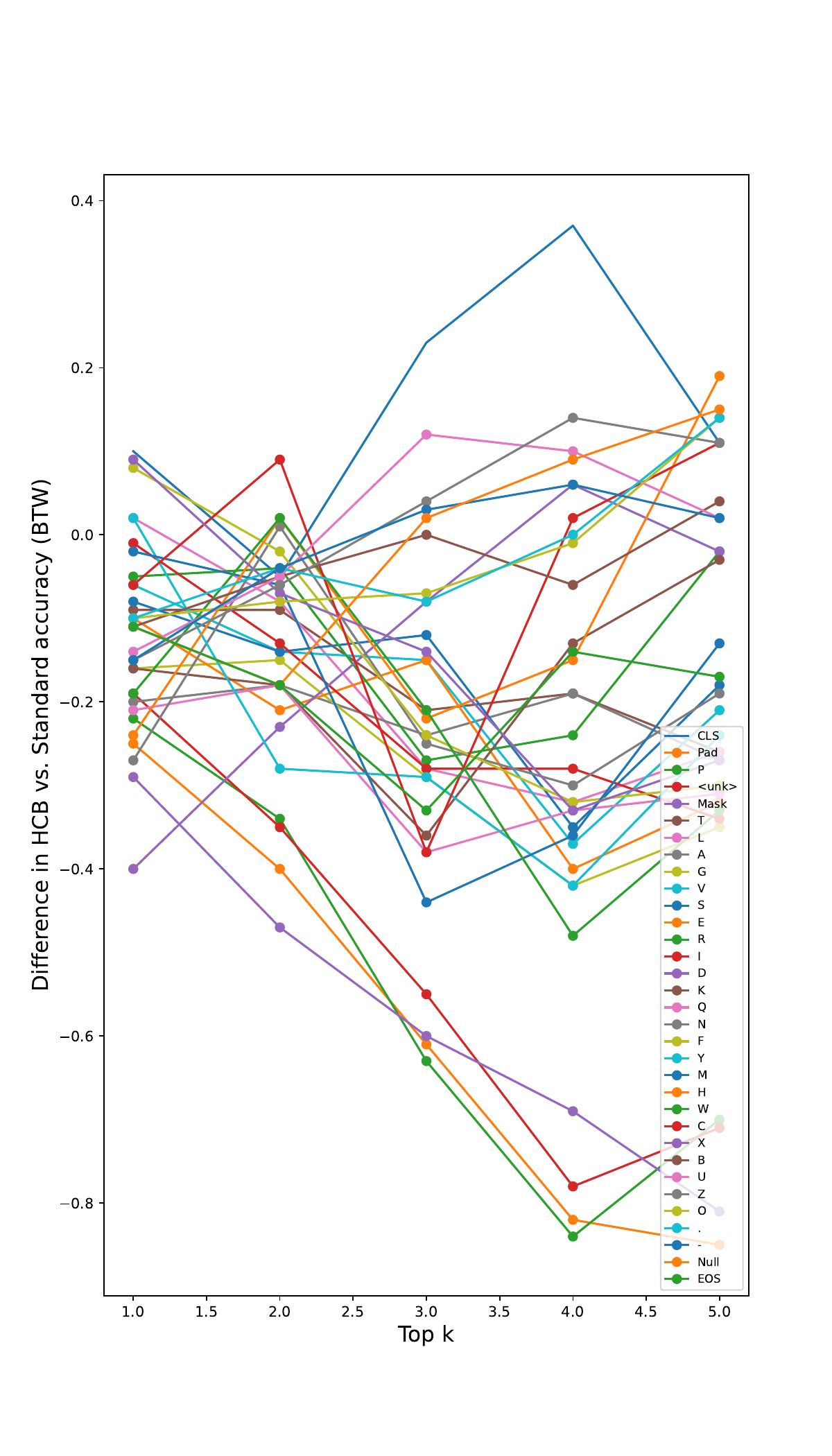}
    \caption{Performance of HCB beam search compared to standard beam search for various choices of pivot. All pivot experiments performed with beam size of 5 and gap size of 2 on the subset UniProt dataset containing 10,000 protein sequences. 50,000 trials were run for each pivot choice.}
    \label{fig:prot_pivot}
\end{figure*}

\begin{figure*}[h!]
    \centering
    \includegraphics[width=0.65\linewidth]{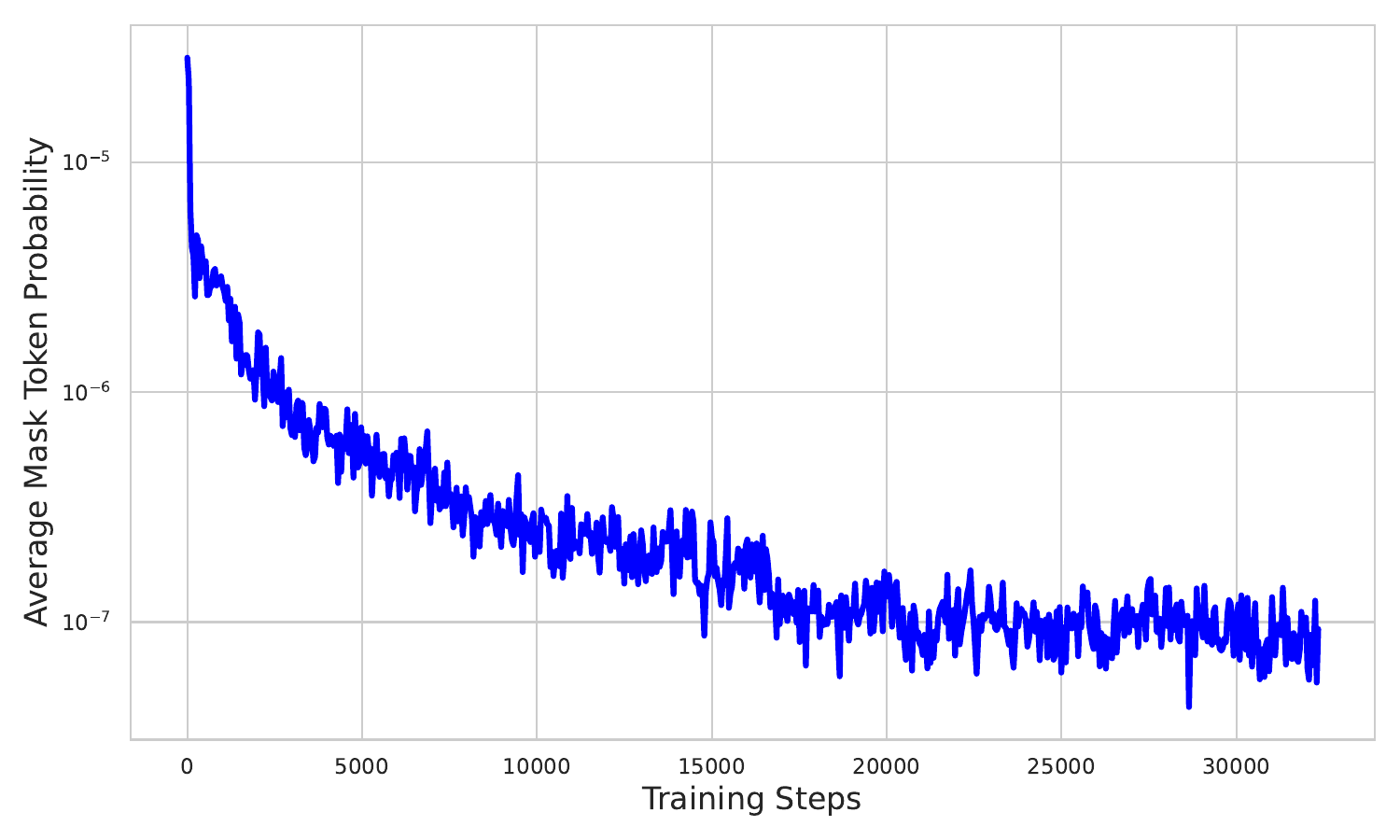}
    \caption{Average $p(\textsc{[mask]}|\cdot)$ for fixed random masks across 50 test examples over the course of BERT training.}
    \label{fig:mask_probs}
\end{figure*}

\end{document}